%% file: Dropout_GRPO_arxiv.tex
\documentclass{article}

\usepackage{PRIMEarxiv}

\usepackage[utf8]{inputenc}
\usepackage[T1]{fontenc}
\usepackage[numbers,compress]{natbib}
\usepackage{hyperref}
\usepackage{url}
\usepackage{booktabs}
\usepackage{amsmath,amssymb,amsthm}
\usepackage{amsfonts}
\usepackage{mathtools}
\usepackage{bm}
\usepackage{nicefrac}
\usepackage{microtype}
\usepackage{xcolor}
\usepackage{enumitem}
\usepackage{graphicx}
\usepackage{algorithm}
\usepackage{algpseudocode}

\newtheorem{theorem}{Theorem}
\newtheorem{lemma}[theorem]{Lemma}
\newtheorem{proposition}[theorem]{Proposition}
\newtheorem{corollary}[theorem]{Corollary}
\theoremstyle{definition}

\newtheorem{assumption}[theorem]{Assumption}
\theoremstyle{remark}
\newtheorem{remark}[theorem]{Remark}

\newcommand{\R}{\mathbb{R}}
\newcommand{\E}{\mathbb{E}}
\newcommand{\Var}{\mathrm{Var}}

\newcommand{\KL}{\mathrm{KL}}

\newcommand{\D}{\mathcal{D}}
\newcommand{\Loss}{\mathcal{L}}
\newcommand{\eps}{\varepsilon}
\newcommand{\old}{\mathrm{old}}

\title{Dropout-GRPO: Variational Stochasticity for Continuous Latent Reasoning}

\author{
  Wooil Jung \\
  Department of Electrical and Computer Engineering\\
  University of California, San Diego\\
  9500 Gilman Dr., La Jolla, CA 92093 \\
  \texttt{w3jung@ucsd.edu} \\
}

\begin{document}
\maketitle

\begin{abstract}
\input{text/0_abstract}

\end{abstract}

\keywords{Continuous latent reasoning \and Reinforcement learning \and GRPO \and Dropout}

\section{Introduction}
\label{sec:intro}
\input{text/1_introduction}

\section{Related Work}
\label{sec:related}
\input{text/2_related_works}

\section{Theoretical Framework}
\label{sec:theory}
\input{text/3_theoretical_framework}

\section{Methodology: Dropout-GRPO}
\label{sec:method}
\input{text/4_methodology}

\section{Experiments}
\label{sec:setup}
\input{text/5_experiments}

\section{Discussion and Limitations}
\label{sec:disc}
\input{text/6_limitations}

\section{Conclusion}
\label{sec:concl}
\input{text/7_conclusion}

\bibliographystyle{plainnat}
\bibliography{references}

\appendix

\section{Extended Discussion and Future Work}
\label{app:extended}
\input{text/A_extended}

\section{Proofs}
\label{app:proofs}
\input{text/B_proofs}

\end{document}

%% file: text/0_abstract.tex

Group Relative Policy Optimization (GRPO) relies on the diversity of $K$ rollouts  within each group; otherwise, the group-mean advantage $A^{(k)} = r^{(k)} - \mu_r$  collapses to zero. This presents a structural challenge for latent-reasoning  models like \textsc{Coconut}, which feed continuous hidden states recurrently in  place of discrete chain-of-thought tokens. Because the latent phase is inherently  deterministic given the parameters and prompt, multiple rollouts produce identical  trajectories, stalling GRPO's progress. Consequently, applying group-relative  reinforcement learning to continuous latent reasoning has proven difficult.

To address this, we propose sourcing the necessary stochasticity through  structured dropout. By applying a single Bernoulli mask held constant across  
all latent recurrence steps for a given rollout, we generate essential trajectory  variance. This shared mask effectively treats each rollout as a posterior sample  
from a variational distribution over parameters, allowing GRPO to optimize the  expected reward of a Bayesian model-average policy. We provide both theoretical  
justification for this method---including unbiasedness, variance reduction, and  the well-definedness of the latent gradient---and empirical validation. On \textsc{GSM8K}, 
dropout-GRPO improves a \textsc{Coconut} baseline from $27.29\%$ to $29.01\%$ pass@1, demonstrating the viability of GRPO learning for latent-reasoning models. Our work positions this as a practical, 
theoretically grounded approach for post-training latent-reasoning LLMs.

%% file: text/1_introduction.tex

The emergence of Chain-of-Thought (CoT) reasoning\cite{Wei2022ChainOfThought, Zhou2022LeastToMost, Feng2023TheoreticalCoT, Kojima2022ZeroShotReasoners} has significantly advanced the reasoning capabilities of Large Language Models (LLMs). However, traditional CoT methods rely on natural language to articulate intermediate reasoning steps. This approach allocates equivalent compute budgets to both the generation of linguistically fluent text and the execution of core logical operations. Furthermore, decoding continuous latent states into discrete language tokens inevitably incurs an information bottleneck, losing internal state information. To address these limitations, continuous latent reasoning models~\cite{hao2024coconut, Zhangetal2025} have been proposed. By reasoning directly within the model's hidden representation, these approaches compress lengthy reasoning chains into efficient latent steps while preserving rich distributional information, thereby unlocking greater reasoning potential.

Despite these advantages, a central challenge in latent reasoning is transitioning the model's semantic knowledge from a discrete language space to a continuous latent space. Various strategies have been explored to bridge this gap, including curriculum learning, knowledge distillation\cite{Deng2023ImplicitCoT}, continuous or compressed thought representations~\cite{Cheng2024CompressedCoT, Shen2025CODI}, recurrent or dynamically allocated latent computation~\cite{Saunshi2025LoopedTransformers, Chen2025InnerThinking}, and latent-space optimization~\cite{Du2025LatentThinking}. Furthermore, reasoning in the hidden state enables step-level exploration capabilities, effectively encoding multiple potential trajectories simultaneously within a single latent representation.\cite{Zhuetal2025, Gozeten2025ParallelExploration} A prominent example of this paradigm is \textsc{Coconut}~\cite{hao2024coconut}, which leverages curriculum learning to internalize reasoning steps.\cite{Deng2024InternalizeCoT} While \textsc{Coconut} demonstrates impressive logical reasoning, empirical observations suggest its mathematical capabilities can regress during curriculum training, highlighting a critical need for post-training reinforcement learning (RL) to refine its performance.

Reinforcement learning from verifier rewards has established itself as the dominant post-training recipe for reasoning language models on mathematics and code benchmarks~\cite{shao2024deepseekmath,yu2024dapo}. Among these methods, Group Relative Policy Optimization (GRPO)~\cite{shao2024deepseekmath} has emerged as a particularly lightweight and effective choice. It eliminates the learned value network of standard PPO~\cite{schulman2017ppo} by estimating the per-prompt advantage from a small group of rollouts using a simple group-mean baseline.

However, applying GRPO to latent reasoning models presents a fundamental structural challenge. Token-level decoders naturally derive the necessary rollout diversity from stochastic sampling over the next-token distribution. In contrast, the continuous latent recurrence in models like \textsc{Coconut} is inherently deterministic given the prompt and parameters. Consequently, multiple rollouts of the same prompt yield identical latent trajectories, causing the group-mean advantage to collapse to zero and stalling the optimization process entirely.

\paragraph{Our contribution.}
To bridge this gap, we propose sourcing the missing stochasticity through structured \emph{dropout}. In our approach, each rollout is executed with a single Bernoulli mask drawn afresh and held constant across all latent recurrence steps. This mask is recorded and \emph{replayed} during the policy update, ensuring the gradient is computed against the exact trajectory that was evaluated. Following the variational interpretation of~\citet{gal2016dropout}, this shared mask treats each rollout as a single posterior sample from a structured distribution over parameters. Under this construction, GRPO successfully optimizes the expected reward of a Bayesian model-average policy. While modern LLM training has largely abandoned dropout as a regularizer, we observe that its ability to produce bit-identical-replayable structured weight perturbations makes it a natural mechanism for the exogenous-noise role required by our construction. We summarize our contributions as follows:

\begin{enumerate}[topsep=2pt,itemsep=2pt,leftmargin=*]
\item \textbf{A new paradigm for RL on latent-reasoning models.}  We identify the structural reason GRPO fails on \textsc{Coconut}-style policies---deterministic latent recurrence---and propose shared-mask variational dropout with mask replay as a minimal, theoretically clean fix. This is the first method, to our knowledge, that successfully applies group-relative reinforcement learning to a continuous latent-reasoning model that uses hidden states directly in \textsc{Coconut} style.
\item \textbf{Theoretical analysis.}  We prove that the dropout-GRPO surrogate gradient with the mean-only advantage $A^{(k)} = r^{(k)} - \mu_r$ equals $(1 - 1/K)\,\nabla_\theta J(\theta)$ at $\theta = \theta_\old$, where $J$ is the expected reward of the mask-marginalized policy. We further show that mask replay provides a common-random-numbers variance reduction over a fresh-rollout baseline, and that the latent Jacobian $\partial h_T / \partial \theta$ is well defined at every depth $T$ under standard Transformer regularity.
\item \textbf{Empirical validation on GSM8K.}  Dropout-GRPO improves a \textsc{Coconut} baseline from $27.29\%$ to $29.01\%$ pass@1, demonstrating that the proposed construction yields a viable learning signal in the regime where deterministic-rollout GRPO fails entirely.
\item \textbf{Implementation refinements and reference code.}  We document four implementation refinements (Huberized $k_3$ KL, learning-rate-coupled $\beta$ annealing, sequence-mean loss aggregation, and a group-level accuracy filter with DDP-coordinated skip steps) that are adopted to stabilize training, and release a reference implementation that realizes bit-identical mask replay across the entire latent recurrence.
\end{enumerate}

%% file: text/2_related_works.tex
\paragraph{Explicit reasoning and Chain-of-Thought.}
Chain-of-Thought (CoT) prompting~\citep{Wei2022ChainOfThought, Kojima2022ZeroShotReasoners} substantially improves LLM performance on symbolic, logical, and mathematical reasoning by articulating intermediate steps in natural language. The autoregressive generation of these steps incurs severe latency and memory-bandwidth overhead, motivating techniques such as CoT distillation~\citep{Hsieh2023DistillingStepByStep} and prompt compression~\citep{Jiang2023LLMLingua}. These approaches still operate within the discrete-token bottleneck, motivating paradigms that bypass language generation during reasoning entirely.

\paragraph{Latent-space reasoning.}
\citet{hao2024coconut} introduced \textsc{Coconut}, providing a pioneering large-scale demonstration that continuous hidden states could replace discrete chain-of-thought tokens for multi-step reasoning. Subsequent work has explored continuous CoT as a superposition of reasoning traces or parallel search frontiers~\cite{Deng2025VocabularySuperposition, Zhuetal2025, Gozeten2025ParallelExploration}. Other approaches compress explicit CoT into continuous representations via self-distillation~\cite{Shen2025CODI} or optimize hybrid latent reasoning policies with RL~\cite{Yue2025HybridHRPO}. Most relevant to our setting is \textsc{LEPO}\cite{Zhou2026LEPO}, which sources stochasticity through a Gumbel-softmax relaxation for latent reasoning and applies policy optimization directly to the resulting latent trajectories. Our approach differs in two key respects: the stochasticity is parameter-side (a structured weight perturbation) rather than embedding-side, and the resulting marginal policy admits a Bayesian model-average interpretation that LEPO's discrete-relaxation construction does not.

\paragraph{Reinforcement learning for language models.}
Policy-gradient training of language models begins with REINFORCE-style estimators~\citep{williams1992reinforce} and was scaled to modern LLMs by Proximal Policy Optimization (PPO)~\citep{schulman2017ppo} with a learned value baseline.  GRPO \citep{shao2024deepseekmath} dispenses with the value model in favor of a group-mean baseline computed across $K$ rollouts of the same prompt, and has become the workhorse for math reasoning post-training. Recent refinements include \textsc{DAPO}~\citep{yu2024dapo}, which introduces ``dynamic sampling'' to filter zero-variance groups, and ``Dr.\ GRPO''~\citep{liu2024drgrpo}, which observes that the standard within-group standard-deviation normalization $1/(\sigma_r + \delta)$ biases the policy gradient and recommends its removal. Our group-level accuracy filter generalizes the DAPO diagnostic into an active filter, and we adopt the Dr.\ GRPO advantage form $A^{(k)} = r^{(k)} - \mu_r$ throughout.

\paragraph{Variational dropout and Bayesian deep learning.}
\citet{gal2016dropout} establish that dropout can be interpreted as variational inference over a structured posterior on weights. We build directly on this interpretation: by enforcing a shared mask across all $T$ latent recurrence steps within one rollout, each rollout becomes a single posterior sample, and the marginal policy is a Bayesian model average. The stochastic-computation-graph framework of~\citet{schulman2015scg} supplies the formal score-function gradient machinery for objectives with stochastic nodes. Common-random-numbers variance reduction~\citep{glasserman1992crn} provides the basis for our mask-replay argument: replaying the rollout-time mask at update time couples the estimator across nearby parameter values, reducing variance relative to fresh resampling.

%% file: text/3_theoretical_framework.tex
\label{sec:prelim}

We first fix notation for the Coconut latent recurrence, GRPO, and the shared-mask dropout construction (Section~\ref{sec:theory-setup}); we then establish well-definedness of the latent gradient (Section~\ref{sec:theory-welldef}), unbiasedness of the surrogate gradient with the mean-only advantage
(Section~\ref{sec:theory-unbiased}), and a common-random-numbers variance reduction from mask replay
(Section~\ref{sec:theory-crn}).  We commit to a Bayesian model-average reading throughout.

\subsection{Setup}
\label{sec:theory-setup}

\paragraph{Coconut latent recurrence.}
Let $f_\theta : \R^d \to \R^d$ denote one Transformer block of a latent-reasoning language model with parameters $\theta \in \R^D$. Given a prompt $x$, \textsc{Coconut}\cite{hao2024coconut} produces an initial hidden state $h_0 = \mathrm{embed}_\theta(x)$, unrolls $T$ latent recurrence steps $h_t = f_\theta(h_{t-1})$, then decodes answer tokens $y_\ell \sim \pi_\theta(\cdot \mid x, h_T, y_{<\ell})$.  A
verifier supplies $r(y, x) \in [0,1]$ on answer exact-match. Conditional on $\theta$ and $x$ the latent phase is deterministic: $K$ independent rollouts produce $K$ identical $h_T$ and (under low-temperature decoding) $K$ identical answers.

\paragraph{GRPO.}
For a group of $K$ rollouts under $\theta_\old$ with rewards $r^{(k)} = r(y^{(k)}, x)$, the per-rollout advantage is $A^{(k)} = r^{(k)} - \mu_r$ with $\mu_r = \tfrac{1}{K}\sum_k r^{(k)}$; following~\citet{liu2024drgrpo} we omit the $1/(\sigma_r + \delta)$ normalization for reasons given in Remark~\ref{rem:sigma}.  The clipped per-token surrogate \citep{shao2024deepseekmath} is
\begin{equation}
\Loss^{\mathrm{GRPO}}(\theta)
   \;=\;
\E_{x \sim \D}
   \frac{1}{K}\sum_{k=1}^K
   \frac{1}{|y^{(k)}|}\sum_\ell
      \min\!\Big(\rho_{\theta,\ell}^{(k)} A^{(k)},\,
                  \mathrm{clip}(\rho_{\theta,\ell}^{(k)}, 1\pm\eps)\, A^{(k)}\Big)
   - \beta\,\KL\!\big(\pi_\theta\,\big\|\,\pi_{\mathrm{ref}}\big),
\label{eq:theory-grpo-loss}
\end{equation}
with per-token ratio
$\rho_{\theta,\ell}^{(k)} = \pi_\theta(y_\ell^{(k)}\!\mid\!\cdot)/\pi_{\theta_\old}(y_\ell^{(k)}\!\mid\!\cdot)$
and KL anchor $\pi_{\mathrm{ref}}$.

\paragraph{Shared-mask dropout and the augmented policy.}
Let $H$ index the dropout-affected hidden units.  Drawing $\xi \sim p(\xi)$ from a fixed mask distribution that does not depend on $\theta$, dropout induces a structured perturbation $\tilde\theta(\xi)$ of the network's effective weights with $\E_\xi[\tilde\theta(\xi)] = \theta$.  We adapt the variational dropout interpretation of~\citet{gal2016dropout} by fixing one mask per rollout, reused across all $T$ latent steps to produce the \textit{dropout-perturbed recurrence}:
\begin{equation}
h_t \;=\; f_{\tilde\theta(\xi)}(h_{t-1}),
\qquad t = 1, \dots, T,
\label{eq:theory-perturbed}
\end{equation}
so each rollout corresponds to a single posterior sample $\tilde\theta(\xi) \sim q_\theta(\theta')$ from a structured variational distribution. The induced policy treats $\xi$ as part of the action, so the induced distribution over $(\xi, y)$ given $x$ is
\begin{equation}
\Pi_\theta(\xi, y \mid x) \;=\; p(\xi)\,\pi_\theta(y \mid x, \xi),
\qquad
\pi_\theta(y \mid x, \xi)
   \;=\; \prod_\ell \pi_\theta\!\big(y_\ell \,\big|\, x, h_T(\theta, \xi, x), y_{<\ell}\big),
\label{eq:theory-augmented-policy}
\end{equation}
with marginal policy $\bar\pi_\theta(y \mid x) := \E_{\xi}[\pi_\theta(y \mid x, \xi)] = \E_{\theta' \sim q_\theta}[\pi_{\theta'}(y \mid x)]$, the Bayesian model-average policy. We optimize $J(\theta) := \E_{x \sim \D}\, \E_{(\xi, y) \sim \Pi_\theta}[r(y, x)]$, which by construction equals the expected reward of $\bar\pi_\theta$.

\paragraph{Mask replay.}
At update time we regenerate $\xi^{(k)}$ from a stored RNG seed and recompute $\log \pi_\theta(y^{(k)} \mid x, \xi^{(k)})$ under the \emph{same} mask used at rollout time.  This makes $h_T$ at update time match its rollout-time value bit-identically when $\theta = \theta_\old$, so that $\rho_{\theta_\old, \ell}^{(k)} \equiv 1$ exactly.

\subsection{Well-definedness of the latent gradient}
\label{sec:theory-welldef}

\begin{assumption}\label{ass:smooth}
For every fixed $\xi$ and almost every $\theta \in \R^D$, the map $\theta \mapsto f_{\tilde\theta(\xi)}(h)$ is differentiable in $\theta$ for almost every $h$, and the answer-token log-likelihood is differentiable in both $\theta$ and $h_T$.
\end{assumption}

The assumption holds for standard Transformer blocks: the dropout rescaling $\theta \mapsto \tilde\theta(\xi)$ is linear in $\theta$ and the remaining nonlinearities (LayerNorm, GELU, softmax) are almost everywhere differentiable.

\begin{lemma}[Well-defined latent gradient]
\label{lem:induction}
Under Assumption~\ref{ass:smooth}, for every $T \ge 0$ and every $\xi$, the Jacobian $\partial h_T / \partial \theta$ exists almost everywhere and is given by the recursion
\begin{equation}
\frac{\partial h_t}{\partial \theta}
   \;=\;
\frac{\partial f_{\tilde\theta}}{\partial \tilde\theta}\bigg|_{h_{t-1}}
   \cdot \frac{\partial \tilde\theta}{\partial \theta}
   \;+\;
\frac{\partial f_{\tilde\theta}}{\partial h}\bigg|_{h_{t-1}}
   \cdot \frac{\partial h_{t-1}}{\partial \theta},
\qquad
\frac{\partial h_0}{\partial \theta}
   \;=\;
\frac{\partial \mathrm{embed}_\theta(x)}{\partial \theta},
\label{eq:theory-jacobian}
\end{equation}
so $\nabla_\theta \log \pi_\theta(y \mid x, \xi)$ is well defined almost everywhere.
\end{lemma}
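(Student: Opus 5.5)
We argue by induction on $T$, using the chain rule at each step together with Assumption~\ref{ass:smooth}. Fix $\xi$ and $x$. For the base case $T=0$, $h_0 = \mathrm{embed}_\theta(x)$ is a composition of a (linear) embedding lookup with, at most, a.e.-differentiable operations. Hence $\partial h_0/\partial\theta$ exists for almost every $\theta$, and this gives the initial condition in \eqref{eq:theory-jacobian}.

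For the inductive step, suppose $\partial h_{t-1}/\partial\theta$ exists for all $\theta$ outside a null set $N_{t-1}$. Write $h_t = F(\theta) := f_{\tilde\theta(\xi)}(h_{t-1}(\theta))$, viewed as the composition $\theta \mapsto (\tilde\theta(\xi), h_{t-1}(\theta)) \mapsto f_{\tilde\theta}(h_{t-1})$. The map $\theta\mapsto\tilde\theta(\xi)$ is linear for fixed $\xi$, so $\partial\tilde\theta/\partial\theta$ is a constant diagonal mask-rescaling matrix. If $(\tilde\theta, h)\mapsto f_{\tilde\theta}(h)$ is jointly differentiable at the point $(\tilde\theta(\xi), h_{t-1}(\theta))$, the multivariate chain rule yields exactly the two-term recursion \eqref{eq:theory-jacobian}. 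The first term is the direct parameter dependence at step $t$; the second propagates the previous Jacobian through $\partial f/\partial h$. Since the same $\xi$ is used at every step, the factor $\partial\tilde\theta/\partial\theta$ is identical across $t$, which keeps the recursion well typed.

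The main obstacle is the null-set bookkeeping. Assumption~\ref{ass:smooth} gives differentiability for almost every $h$, but $h_{t-1}(\theta)$ is a specific function of $\theta$. We must show that the set of $\theta$ for which $h_{t-1}(\theta)$ lands in the non-differentiability set of $f$ is itself null. My first attempt would exploit the structure of the nonsmoothness. For Transformer blocks it arises only from finitely many piecewise-analytic components: GELU is in fact smooth, softmax is smooth, and LayerNorm is smooth away from zero variance. The bad set is therefore a finite union of analytic subvarieties of the form $\{\theta : g(\theta)=0\}$ with $g$ real-analytic and not identically zero. Such zero sets have Lebesgue measure zero. If $g$ were identically zero, that would be a degenerate architecture, which we exclude. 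A fallback is to strengthen the reading of the assumption to joint differentiability a.e. in $(\theta,h)$ and invoke the piecewise-analytic structure directly. With either route, set $N_t = N_{t-1}\cup B_t$ with $B_t$ null; a finite union of null sets is null, so after $T$ steps the exceptional set $N_T$ is null.

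Finally, $\log\pi_\theta(y\mid x,\xi) = \sum_\ell \log \pi_\theta(y_\ell\mid x, h_T(\theta,\xi,x), y_{<\ell})$ is differentiable in $(\theta, h_T)$ by Assumption~\ref{ass:smooth}. By one more application of the chain rule,
\[
\nabla_\theta \log\pi_\theta(y\mid x,\xi) = \partial_\theta \log\pi + \Big(\frac{\partial h_T}{\partial\theta}\Big)^{\!\top}\partial_{h_T}\log\pi ,
\]
which exists outside $N_T$, i.e.\ almost everywhere. This completes the proof for every finite $T$ and every $\xi$.
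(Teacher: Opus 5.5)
This is correct and is essentially the paper's proof: induction on $t$ with the two-term chain rule at each latent step (where $\partial\tilde\theta/\partial\theta$ is the fixed linear mask map determined by $\xi$), followed by one more chain-rule step through the answer-token head. Your null-set bookkeeping goes beyond the paper, which simply asserts that both terms exist ``by Assumption~\ref{ass:smooth} and the inductive hypothesis.'' Three caveats on that patch: it relies on architecture-specific analyticity rather than on Assumption~\ref{ass:smooth} itself; joint a.e.\ differentiability in $(\theta,h)$ would not help on its own, since the graph of $h_{t-1}$ is Lebesgue-null in $(\theta,h)$-space; and with $\epsilon>0$ in the normalization, every component of a standard block is real-analytic, so the exceptional set is in fact empty.
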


\noindent Proof in Appendix~\ref{app:proofs}. The recursion~\eqref{eq:theory-jacobian} is what \texttt{loss.backward()} computes when the dropout mask is held constant via RNG-state restoration: PyTorch's \texttt{torch.nn.functional.dropout} treats the mask as a constant during backprop, so $\partial \tilde\theta / \partial \theta$ is the structured Bernoulli-rescaled identity exactly.

\subsection{Unbiasedness of the surrogate gradient}
\label{sec:theory-unbiased}

\begin{theorem}[Unbiased policy-surrogate gradient]
\label{thm:unbiased}
Treat the $K$ rollouts as i.i.d.\ draws from $\Pi_{\theta_\old}(\cdot \mid x)$ and define the policy-surrogate
component $\Loss^{\mathrm{surr}} := \Loss^{\mathrm{GRPO}} + \beta \KL$. Under $A^{(k)} = r^{(k)} - \mu_r$ and mask replay,
\begin{equation}
\nabla_\theta \Loss^{\mathrm{surr}}(\theta)\Big|_{\theta = \theta_\old}
   \;=\;
\Big(1 - \tfrac{1}{K}\Big)\,\nabla_\theta J(\theta)\Big|_{\theta_\old}.
\label{eq:theory-unbiased}
\end{equation}
\end{theorem}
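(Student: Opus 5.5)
The plan is to differentiate the surrogate at $\theta=\theta_\old$, reduce it to a REINFORCE-type estimator with a leave-in group-mean baseline, and then take expectations over the i.i.d.\ group to extract the factor $(1-1/K)$. The clipped surrogate and the outer expectation are interpreted as in the statement: the gradient of $\Loss^{\mathrm{surr}}$ means the expectation, over the i.i.d.\ group, of the gradient of the per-group surrogate.

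\textbf{Step 1: remove the clip.} By mask replay, $\rho^{(k)}_{\theta_\old,\ell}\equiv 1$. Hence at $\theta_\old$ the clip is inactive on an open neighbourhood, since $1$ lies strictly inside $[1-\eps,1+\eps]$. There the min equals $\rho^{(k)}_{\theta,\ell}A^{(k)}$. Because $A^{(k)}$ is computed from rewards under $\theta_\old$, it is a constant in $\theta$.

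\textbf{Step 2: reduce to a score function.} Using $\nabla_\theta\rho^{(k)}_{\theta,\ell}|_{\theta_\old}=\nabla_\theta\log\pi_\theta(y^{(k)}_\ell\mid x,\xi^{(k)},y^{(k)}_{<\ell})|_{\theta_\old}$, which is well defined by Lemma~\ref{lem:induction}, the per-group gradient becomes $\frac1K\sum_k A^{(k)} g^{(k)}$. Here $g^{(k)}$ is the (length-weighted) sum of token scores. I will take $g^{(k)}=\nabla_\theta\log\pi_\theta(y^{(k)}\mid x,\xi^{(k)})$, treating the $1/|y^{(k)}|$ factor as absorbed or as constant-length. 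This is a point to flag: if lengths vary, the identity holds for a length-reweighted $J$.

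\textbf{Step 3: identify $\nabla J$.} Since $p(\xi)$ does not depend on $\theta$, the augmented policy satisfies $\nabla\log\Pi_\theta(\xi,y\mid x)=\nabla\log\pi_\theta(y\mid x,\xi)$. The log-derivative trick on \eqref{eq:theory-augmented-policy} then gives $\nabla J=\E_x\E_{\Pi}[r\,g]$. Exchanging derivative and expectation is justified by the finite answer space per mask value and bounded rewards.

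\textbf{Step 4: compute the expectation.} Expand $A^{(k)}=(1-\frac1K)r^{(k)}-\frac1K\sum_{j\ne k}r^{(j)}$. By independence, $\E[r^{(j)}g^{(k)}]=\E[r^{(j)}]\,\E[g^{(k)}]$ for $j\ne k$, and $\E[g^{(k)}]=0$ by the score identity $\sum_y\nabla\pi=0$ for each fixed $\xi$. Hence $\E[A^{(k)}g^{(k)}]=(1-\frac1K)\,\E[r\,g]=(1-\frac1K)\nabla J$, and averaging over $k$ gives \eqref{eq:theory-unbiased}.

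The main obstacle is Step~1 together with the length normalization: making rigorous that the clipped min is differentiable at $\rho=1$, and handling the $1/|y|$ weighting. For the clip, I will rely on the open-neighbourhood argument. For the length weighting, I will either assume fixed-length answers or state the result for the correspondingly normalized objective.
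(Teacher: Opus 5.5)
Your route is the paper's: mask replay gives $\rho\equiv 1$, so the clip drops out. The gradient then reduces to $\frac1K\sum_k (r^{(k)}-\mu_r)\,s^{(k)}$. The $j\neq k$ baseline terms vanish by independence and the zero-mean score, and $\E[r\,s]$ is identified with $\nabla J$. Your open-neighbourhood argument for the clip and your observation $\nabla\log\Pi_\theta=\nabla\log\pi_\theta(\cdot\mid x,\xi)$ add rigor that the paper leaves implicit.

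One correction concerns the $1/|y^{(k)}|$ factor, which the paper's proof silently drops. Your fallback claim, that the identity ``holds for a length-reweighted $J$,'' is false. With $w=1/|y|$ the cross terms become $\E[r^{(j)}]\,\E[w^{(k)}g^{(k)}]$. Here $\E[w\,g]=\nabla_\theta\E_{\Pi_\theta}[1/|y|]$ need not vanish. The expected gradient is therefore $(1-\tfrac1K)\big(\nabla\E[r/|y|]-\E[r]\,\nabla\E[1/|y|]\big)$, which is not the gradient of a reweighted reward objective: the group mean stops being a valid control variate. So commit to the fixed-length assumption, which gives the theorem up to the harmless constant $1/L$, or to sum aggregation. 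The variable-length, length-normalized surrogate is not covered by this argument.
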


\begin{proof}
See Appendix~\ref{app:proofs}.
\end{proof}

\begin{corollary}[Optimization on the marginal policy]
\label{cor:marginal}
The estimator targets $\nabla J(\theta)$, the expected reward of the Bayesian model-average policy $\bar\pi_\theta$ \eqref{eq:theory-augmented-policy}.  The shared-mask construction turns dropout-GRPO into ensemble-aware policy optimization: each gradient step is a Monte Carlo average of per-ensemble-member policy gradients, with the group mean serving as a control variate.
\end{corollary}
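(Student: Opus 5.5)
The statement to prove is Corollary~\ref{cor:marginal}. I would read it as a restatement of Theorem~\ref{thm:unbiased} in terms of the marginal policy, plus an explicit decomposition of the estimator. The argument has three parts: identify the target $J$ with the expected reward of $\bar\pi_\theta$; rewrite $\nabla J$ as a mask-averaged policy gradient; and show that the group-mean advantage is a leave-one-out control variate whose only effect is the $(1-1/K)$ rescaling.

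\emph{Step 1: the target.} Fubini over $\xi$ and $y$ gives $\E_{(\xi,y)\sim\Pi_\theta}[r(y,x)] = \sum_y r(y,x)\,\E_\xi[\pi_\theta(y\mid x,\xi)] = \E_{y\sim\bar\pi_\theta}[r(y,x)]$. This holds because $p(\xi)$ does not depend on $\theta$ and $r\in[0,1]$ is bounded. So $J$ is exactly the expected reward of the model-average policy~\eqref{eq:theory-augmented-policy}. By Theorem~\ref{thm:unbiased}, the surrogate gradient at $\theta_\old$ is a positive multiple of $\nabla J(\theta_\old)$, which is the first claim.

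\emph{Step 2: ensemble structure.} Since $p(\xi)$ is $\theta$-free, differentiating under the integral gives
\[
\nabla J(\theta)=\E_x\,\E_{\xi\sim p}\Big[\E_{y\sim\pi_\theta(\cdot\mid x,\xi)}\big[r(y,x)\,\nabla_\theta\log\pi_\theta(y\mid x,\xi)\big]\Big].
\]
Exchanging derivative and integral is justified by bounded $r$ and by the almost-everywhere differentiability of the log-likelihood from Lemma~\ref{lem:induction}, together with a domination assumption on the score. The inner expectation is the ordinary policy gradient of the single ensemble member $\pi_{\tilde\theta(\xi)}$, differentiated through the replayed mask. Each rollout $k$ contributes one draw of this quantity at $\xi^{(k)}$, so the estimator is a Monte Carlo average over ensemble members. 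Mask replay matters here: it makes $\nabla\rho^{(k)}_{\theta,\ell}|_{\theta_\old}=\nabla\log\pi_\theta(y^{(k)}_\ell\mid\cdot,\xi^{(k)})$ hold with the same $\xi^{(k)}$ that produced $y^{(k)}$.

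\emph{Step 3: control-variate role of $\mu_r$.} Write $A^{(k)}=(1-\tfrac1K)r^{(k)}-\tfrac1K\sum_{j\neq k}r^{(j)}$. The term $\tfrac1K\sum_{j\ne k}r^{(j)}$ is independent of rollout $k$ because the rollouts are i.i.d. The score has zero mean, $\E[\nabla\log\pi_\theta(y^{(k)}\mid x,\xi^{(k)})]=0$, because $\sum_y\pi_\theta(y\mid x,\xi)=1$ for each fixed $\xi$. Hence this term contributes zero in expectation while reducing variance, which is the sense in which the group mean acts as a control variate. The surviving factor $(1-1/K)$ is the one in~\eqref{eq:theory-unbiased}.

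The main obstacle is the per-token $1/|y^{(k)}|$ normalization in~\eqref{eq:theory-grpo-loss}. It weights the per-token scores unequally and could break the exact identification with the sequence score. I would handle it by assuming a fixed answer length, or by absorbing the normalization into the same convention used for Theorem~\ref{thm:unbiased}, so that the corollary inherits whatever reading that theorem uses.
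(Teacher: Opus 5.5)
Your proposal is correct and follows the paper's outline. The paper gives no separate proof of Corollary~\ref{cor:marginal}: it reads the corollary off Theorem~\ref{thm:unbiased} together with the identity $J(\theta)=\E_x\E_{y\sim\bar\pi_\theta}[r]$, which the setup asserts by construction (your Step~1). Your Step~3 is the same leave-one-out computation $\E[\mu_r s^{(k)}]=\tfrac1K\E[r^{(k)}s^{(k)}]$ used in the paper's proof of that theorem.

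Your Step~2 goes beyond the paper. The paper's proof of Theorem~\ref{thm:unbiased} identifies $\E[r^{(k)}s^{(k)}]$ with $\nabla J(\theta_\old)$ \emph{by citing this very corollary}, so the two results lean on each other. You instead supply the missing fact directly: $\nabla_\theta\log\Pi_\theta(\xi,y\mid x)=\nabla_\theta\log\pi_\theta(y\mid x,\xi)$ because $p(\xi)$ is $\theta$-free, which makes $\E[r^{(k)}s^{(k)}]$ the REINFORCE gradient of $J$. To keep your own argument non-circular, present it as Step~2, then Step~3, then the conclusion, rather than invoking Theorem~\ref{thm:unbiased} in Step~1. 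One technical point: Lemma~\ref{lem:induction} only gives almost-everywhere differentiability. Your interchange therefore yields the identity for almost every $\theta_\old$, not at an arbitrary one, unless you also assume differentiability at $\theta_\old$.

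The obstacle you flag is real. The paper's proof silently drops the $1/|y^{(k)}|$ factor, so your second remedy has nothing sound to inherit. With variable lengths, $\E[s^{(k)}/|y^{(k)}|]=\nabla_\theta\E_{\Pi_\theta}[1/|y|]$ is generally nonzero. The leave-one-out term in your Step~3 then no longer vanishes, and per prompt the expected surrogate gradient at $\theta_\old$ is $(1-\tfrac1K)\big(\nabla\E[r/|y|]-\E[r]\,\nabla\E[1/|y|]\big)$, which is not a multiple of $\nabla J$. Your first remedy is the right one: a fixed answer length $L$ gives $(1-\tfrac1K)L^{-1}\nabla J$, and sequence-sum aggregation gives exactly $(1-\tfrac1K)\nabla J$.

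Finally, the zero-mean property is all that the control-variate claim requires. Your phrase \emph{while reducing variance} is not established by your argument, so state it as a heuristic.
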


\begin{remark}[Why $\sigma_r$ is dropped]
\label{rem:sigma}
The original GRPO advantage $(r^{(k)} - \mu_r)/(\sigma_r + \delta)$ couples $s^{(k)}$ to every $r^{(j)}$ via a nonlinear function of all rewards; the factorization $\E[r^{(j)} s^{(k)}] = \E[r^{(j)}]\E[s^{(k)}]$ used above breaks and the estimator is biased. \citet{liu2024drgrpo} report that this bias overweights low-variance prompts and underweights high-variance ones; we drop $\sigma_r$ throughout.
\end{remark}

\subsection{Variance reduction via mask replay}
\label{sec:theory-crn}

Two unbiased estimators of $\nabla J(\theta_\old + \Delta)$ for small $\Delta$ are available: the \emph{mask-replay} (CRN) estimator $\hat g_{\mathrm{CRN}}(\theta;\xi,y) := A(\xi,y)\,\nabla_\theta \log \pi_\theta(y \mid x, \xi)$, which reuses the rollout-time $(\xi, y)$, and the \emph{fresh-rollout} estimator $\hat g_{\mathrm{fresh}}(\theta;\xi',y')$ obtained by drawing $(\xi', y') \sim \Pi_{\theta_\old}(\cdot \mid x)$ at update time and recomputing the advantage from a fresh group.

\begin{proposition}[CRN coupling]
\label{prop:crn}
Both estimators are unbiased for $\nabla J(\theta)$ at $\theta = \theta_\old$ (modulo the $(1 - 1/K)$ scale of Theorem~\ref{thm:unbiased}). Suppose $\theta \mapsto A(\xi, y)\,\nabla \log \pi_\theta(y \mid x, \xi)$ is $L$-Lipschitz $\Pi_{\theta_\old}$-a.s.  Then in the trust-region regime $\|\Delta\| \to 0$ the two estimators have matching leading-order variance, while the fresh-rollout estimator carries an additional across-call resampling-variance term that the mask-replay estimator eliminates by construction.
\end{proposition}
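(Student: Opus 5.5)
Our plan is to split the claim into an unbiasedness part and a variance-comparison part. Both rest on Theorem~\ref{thm:unbiased} and on the score-function identity for the augmented policy~\eqref{eq:theory-augmented-policy}.

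\textbf{Unbiasedness.} At $\theta=\theta_\old$, mask replay gives $\rho^{(k)}_{\theta_\old,\ell}\equiv 1$, so the clip is inactive. The gradient of the surrogate is then the group average of $A^{(k)}\nabla_\theta\log\pi_\theta(y^{(k)}\mid x,\xi^{(k)})$, the factor $1/|y^{(k)}|$ aside. Lemma~\ref{lem:induction} guarantees that this score is defined almost everywhere, including through the latent Jacobian recursion~\eqref{eq:theory-jacobian}.

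The expectation over the i.i.d.\ group is exactly the computation of Theorem~\ref{thm:unbiased}. Write $A^{(k)}=(1-\tfrac1K)r^{(k)}-\tfrac1K\sum_{j\ne k}r^{(j)}$. The cross terms vanish because $\E[s^{(k)}]=0$; here $p(\xi)$ is $\theta$-free, so $\nabla\log\Pi_\theta=\nabla\log\pi_\theta(y\mid x,\xi)$. The diagonal term gives $(1-\tfrac1K)\nabla J$.

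For $\hat g_{\mathrm{fresh}}$ the same computation applies verbatim: its $(\xi',y')$ and fresh group are again i.i.d.\ from $\Pi_{\theta_\old}$, so it has the same mean. Hence both estimators are unbiased up to the common scale $(1-\tfrac1K)$.

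\textbf{Variance.} For a displacement $\Delta$, I will Taylor-expand each estimator around $\theta_\old$ using the Lipschitz hypothesis. For the CRN estimator,
\[
\hat g_{\mathrm{CRN}}(\theta_\old+\Delta;\xi,y)=\hat g_{\mathrm{CRN}}(\theta_\old;\xi,y)+R(\xi,y),\qquad \|R\|\le L\|\Delta\| \text{ a.s.}
\]
Therefore
\[
\Var(\hat g_{\mathrm{CRN}}(\theta))=\Var(\hat g(\theta_\old))+O(L\|\Delta\|\sqrt{\Var})+O(L^2\|\Delta\|^2),
\]
and the leading term is $\Sigma_0:=\Var_{\Pi_{\theta_\old}}(\hat g(\theta_\old))$. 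The fresh estimator, taken at a single call, has the same leading-order variance $\Sigma_0$ by the same expansion, which gives the ``matching leading-order variance'' clause.

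The additional term comes from viewing the procedure as repeated calls across the trust-region inner loop, at $\theta_\old+\Delta_1,\theta_\old+\Delta_2,\dots$. We then compare the variance of a difference, or of an average, of estimates at two nearby parameters $\theta_1,\theta_2$:
\begin{itemize}
\item Under CRN the same $(\xi,y)$ is used, so $\Var(\hat g(\theta_1)-\hat g(\theta_2))\le L^2\|\theta_1-\theta_2\|^2\to0$.
\item Under fresh draws the two are independent, so the variance is $\Var(\hat g(\theta_1))+\Var(\hat g(\theta_2))\approx 2\Sigma_0$.
\end{itemize}
The excess $2\Sigma_0 - O(L^2\|\Delta\|^2)$ is the across-call resampling term; equivalently, it is the $\mathrm{Cov}$ term that the common-random-numbers argument of~\citet{glasserman1992crn} makes positive.

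\textbf{Main obstacle.} Pinning down precisely what ``additional resampling variance'' means as a quantity is the delicate step; a single-call comparison alone gives equality. I would formalize it through the law of total variance, conditioning on the rollout group: $\Var=\E[\Var(\cdot\mid\text{group})]+\Var(\E[\cdot\mid\text{group}])$.
\begin{itemize}
\item For CRN, the conditional variance given the stored $(\xi,y)$ is zero.
\item For fresh rollouts, the conditional variance is $\Sigma_0+O(\|\Delta\|)$, which is the extra term.
\end{itemize}
The Lipschitz assumption is needed only to control the $O(\|\Delta\|)$ remainders uniformly. The unbounded score can be handled by that same a.s.\ Lipschitz bound together with the boundedness $r\in[0,1]$.
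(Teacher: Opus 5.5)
Your proposal follows essentially the paper's route: you write each estimator as its value at $\theta_\old$ plus an $L\|\Delta\|$-bounded remainder, use the identical zero-order laws under $\Pi_{\theta_\old}$ to match the leading variance, and attribute the extra term to independent resampling across calls, which replay removes; your difference-variance bound and your conditioning on the stored group make the paper's informal across-call claim more concrete. Two small corrections: replay's positive correlation lowers the variance of a \emph{difference} of per-call estimates but raises the variance of an average or sum, so drop ``or of an average''; and the Lipschitz hypothesis does not give finiteness of $\Sigma_0$, since it only controls increments in $\theta$. That finiteness needs a second-moment bound on the score, which is automatic here because masks and answers range over finite sets.
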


\begin{proof}[Proof sketch]
See Appendix~\ref{app:proofs}.
\end{proof}

In the practical PPO setting where $\|\theta - \theta_\old\|$ is small under the clip and $E$ mini-epochs of updates are taken per rollout batch, the coupling is the dominant variance reduction.

%% file: text/4_methodology.tex
This section describes the dropout-GRPO algorithm
(Section~\ref{sec:method-algo}) and four implementation refinements
adopted to stabilize training in practice
(Sections~\ref{sec:method-huber}--\ref{sec:method-filter}).

\subsection{Algorithm}
\label{sec:method-algo}

The procedure is summarized in Algorithm~\ref{alg:dropout-grpo}.  For
each prompt $x$, we draw $K$ independent dropout masks
$\xi^{(1)}, \dots, \xi^{(K)}$, each held constant across all $T$ latent
recurrence steps within its rollout, and store the underlying RNG
seed.  We compute the rollout reward $r^{(k)}$, advantage
$A^{(k)} = r^{(k)} - \mu_r$, and at update time regenerate
$\xi^{(k)}$ from the stored seed (via PyTorch's CUDA + CPU RNG-state
restoration), so that $\log \pi_\theta(y^{(k)} \mid x, \xi^{(k)})$ at
update time matches its rollout-time value bit-identically when
$\theta = \theta_\old$. We use sequence-mean for consistency with per-rollout normalization.

\begin{algorithm}[h]
\caption{Dropout-GRPO for latent-reasoning LLMs}
\label{alg:dropout-grpo}
\begin{algorithmic}[1]
\Require $\pi_{\theta_\old}$, $\D$, $K$, $p$, $T$, $\eps$, $\beta_0$, $\eta_t$, $[\alpha_{\min}, \alpha_{\max}]$, $\delta$.
\For{$t = 1, \dots, T_{\max}$}
  \State Sample mini-batch $x \sim \D$. For $k = 1, \dots, K$: store RNG seed for $\xi^{(k)}$, unroll \eqref{eq:theory-perturbed} restoring the chain seed at every latent step (one mask per rollout), decode $y^{(k)}$, record $\log \pi_{\theta_\old}(y^{(k)} \mid x, \xi^{(k)})$, compute $r^{(k)}$.
  \State Compute $\mu_r$. If $\mu_r \notin [\alpha_{\min}, \alpha_{\max}]$, drop group (DDP-coordinated, \S\ref{sec:method-filter}); else set $A^{(k)} = r^{(k)} - \mu_r$.
  \State For each kept $k$: regenerate $\xi^{(k)}$ via RNG restore, recompute $\log \pi_\theta(y^{(k)} \mid x, \xi^{(k)})$ and $\rho_\ell^{(k)}$.
  \State Take a gradient step on
         $\Loss(\theta) = -\frac{1}{K_{\mathrm{kept}}}\sum_{k} \frac{1}{|y^{(k)}|}\sum_\ell \min(\rho_\ell^{(k)} A^{(k)}, \mathrm{clip}(\rho_\ell^{(k)}, 1\pm\eps) A^{(k)}) + \beta(t)\frac{1}{K_{\mathrm{kept}}}\sum_k \frac{1}{|y^{(k)}|}\sum_\ell \tilde k_3(r_\ell^{\mathrm{ref}})$,
         with $\beta(t) = \beta_0\,\eta_t/\eta_{\max}$ and $\tilde k_3$ from~\eqref{eq:method-huber-kl}; set $\theta_\old \gets \theta$.
\EndFor
\end{algorithmic}
\end{algorithm}

\begin{figure}[t]
\centering
\includegraphics[width=\linewidth]{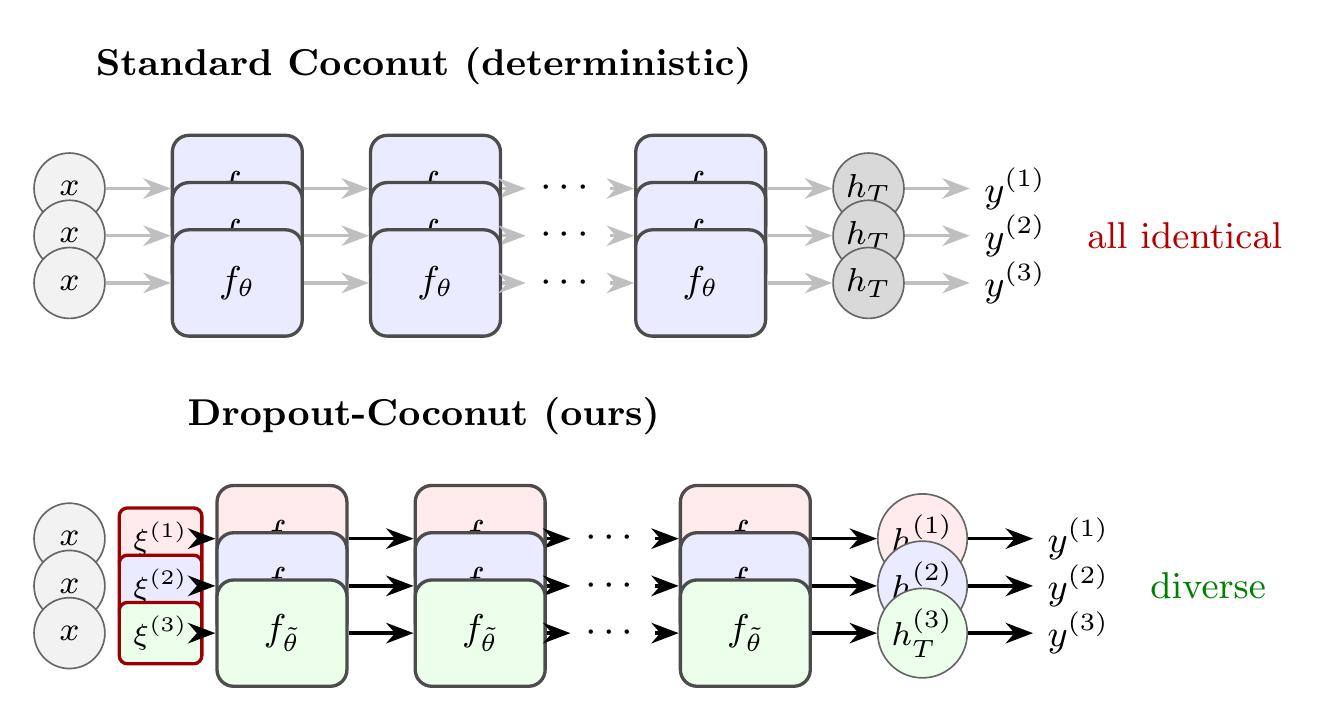}
\caption{Dropout-GRPO pipeline. A single prompt $x$ is broadcast to $K$ parallel rollouts; each rollout draws an independent mask $\xi^{(k)}$ that is held constant across all $T$ latent steps (denoted by $f^{\times T}$). The resulting answers $y^{(k)}$ receive verifier rewards $r^{(k)}$, the group mean $\mu_r$ is computed, and each rollout's advantage calculated by $A^{k} = r^{(k)} - \mu_r$. Mask replay at update time ensures the policy gradient is computed against the trajectory that was graded.}
\label{fig:method-overview}
\end{figure}

\subsection{Huberized $k_3$ KL estimator}
\label{sec:method-huber}

The $k_3$ KL estimator~\citep{schulman2020kl} is
$k_3(r) = \exp(r) - r - 1$ with
$r := \log \pi_\theta - \log \pi_{\mathrm{ref}}$.  A common
implementation choice is to clamp $r$ to $[-\delta, \delta]$ before
evaluating $k_3$ to prevent overflow.  This has the unintended effect
of zeroing the gradient outside $[-\delta, \delta]$ (the autograd
derivative of \texttt{clamp} is zero beyond its bounds), so any token
whose log-ratio drifts past $\pm\delta$ escapes KL regularization
entirely.  Empirically, this is a dominant pre-collapse signature: a
small tail of escaped tokens drifts further with no restoring force,
eventually dragging the whole policy with it.

We replace the clamp with a Huber-style extension: $k_3$ inside,
linear extrapolation outside, value- and slope-matched at $\pm\delta$:
\begin{equation}
\tilde k_3(r) \;=\;
\begin{cases}
\exp(r) - r - 1 & |r| \le \delta, \\
(e^\delta - 1)(r - \delta) + (e^\delta - \delta - 1) & r > \delta, \\
(e^{-\delta} - 1)(r + \delta) + (e^{-\delta} + \delta - 1) & r < -\delta.
\end{cases}
\label{eq:method-huber-kl}
\end{equation}
By construction $\tilde k_3 \in C^1(\R)$, $\tilde k_3 \ge 0$, and the
gradient $\tilde k_3'(r)$ is bounded by $|e^\delta - 1|$ in the right
tail and $|e^{-\delta} - 1| < 1$ in the left tail but \emph{nonzero}
everywhere outside $r = 0$.  Drifted tokens therefore continue to feel
a restoring pull, while the maximum per-token contribution stays
bounded.  We use $\delta = 5$ throughout, giving a maximum slope of
$e^5 - 1 \approx 147$.

\subsection{Trust-region annealing of $\beta$}
\label{sec:method-anneal}

Under a cosine-annealed learning-rate schedule $\eta_t$, holding the
KL coefficient $\beta$ fixed produces a slowly worsening dynamic: the
per-step policy update shrinks with $\eta_t$, but the per-step KL pull
$\eta_t \cdot \beta \cdot \nabla \KL$ also shrinks linearly with
$\eta_t$, so the relative pull stays constant.  However, the policy
gradient direction integrates informatively across steps while the KL
gradient adds coherently toward $\theta_{\mathrm{ref}}$, eventually
pinning the policy at an equilibrium where the two cancel.  We anneal
$\beta$ proportionally with $\eta_t$,
\begin{equation}
\beta(t) \;=\; \beta_0 \cdot \frac{\eta_t}{\eta_{\max}},
\label{eq:method-anneal}
\end{equation}
so the per-step KL pull scales with $\eta_t^2$.  This shifts the
trust-region equilibrium outward as the lr decays and lets the policy
continue to drift away from $\pi_{\mathrm{ref}}$ on the slow
schedule.

\subsection{Group-level accuracy filter and emergent curriculum}
\label{sec:method-filter}

Following~\citet{yu2024dapo}'s ``dynamic sampling'' diagnostic, we
observe that groups with $\mu_r \in \{0, 1\}$ on binary verifier
rewards have $A^{(k)} \equiv 0$ for every $k$ and contribute no
gradient signal.  We extend this diagnostic into an active filter: at
the end of the rollout phase, drop groups whose empirical accuracy
$\mu_r$ falls outside a configured window
$[\alpha_{\min}, \alpha_{\max}]$.  Default values are
$\alpha_{\min} = 0.20$ and $\alpha_{\max} = 0.95$.  Dropping these
groups before any backward pass concentrates compute on the
marginal band where signal-to-noise is highest.  Because the
band shifts as the policy improves, an emergent online curriculum
appears with no explicit data ordering.

\paragraph{Compatibility with Theorem~\ref{thm:unbiased}.}
The filter is a function of the rollout outcomes $\{r^{(k)}\}$, which
depend on $\theta$, and is therefore not exogenous.  The unbiasedness
statement of Theorem~\ref{thm:unbiased} no longer applies to
$\nabla J(\theta)$ on the original prompt distribution but does apply
to the filtered objective
$J_{\mathrm{filt}}(\theta) := \E_{x \sim \D}\big[\E_{(\xi, y)}[r] \mid \mu_r(x) \in [\alpha_{\min}, \alpha_{\max}]\big]$.
The implicit reweighting toward the marginal-difficulty band is the
curriculum effect, by design.

%% file: text/5_experiments.tex
\subsection{Model}
\label{sec:setup-model}

We use \texttt{Qwen2.5-1.5B}~\citep{qwen2024} as the base model for all experiments. We deviate from the GPT-2-based backbone of the original \textsc{Coconut} paper~\citep{hao2024coconut} for two reasons: (i) GPT-2 uses absolute positional embeddings, which combined with left-padding induces a position bias that is particularly harmful in the latent-recurrence regime where prompt length varies across rollouts; (ii) Qwen2.5-1.5B provides an optimal balance by retaining modern architectural choices (RoPE, GQA, SwiGLU) and sufficient representational capacity for policy learning, while remaining tractable on a single GPU for the $K \cdot T$ rollout compute that GRPO requires.

We compare the following models:
\begin{itemize}[topsep=2pt,itemsep=2pt,leftmargin=*]
\item \textbf{Qwen2.5-1.5B base} (zero-shot and 1-shot).
\item \textbf{Qwen2.5-1.5B + Coconut SFT.}  \textsc{Coconut}-style supervised fine-tuning following the curriculum of~\citet{hao2024coconut}, at latent depth $T = 6$. Curriculum training uses $c=3$ latent tokens per reasoning step.
\item \textbf{Coconut + GRPO (deterministic rollouts).}  Standard GRPO without dropout.  Expected to fail by the argument of Section~\ref{sec:intro}; included to demonstrate the structural barrier.
\item \textbf{Coconut + REINFORCE + dropout (EMA baseline).}  Single rollout per prompt ($K = 1$) with a running EMA baseline in place of the group-mean baseline.  Isolates the contribution of group-relative structure from the contribution of latent stochasticity.
\item \textbf{Coconut + dropout-GRPO (ours).}  The full method of Algorithm~\ref{alg:dropout-grpo}.
\end{itemize}

\subsection{Datasets}
\label{sec:setup-data}

The dataset used and the respective number of prompts are detailed in Table~\ref{tab:train_data}. The choice of SFT dataset is consistent with the baseline \textsc{Coconut} model, which provides concise per-step intermediate reasoning suitable for the \textsc{Coconut} curriculum. We deliberately match the SFT and GRPO data distributions in style and difficulty: the \textsc{Coconut} latent representation is sensitive to the prompt distribution, and large distributional shift between SFT and GRPO degrades the SFT-initialized latent representation faster than GRPO can productively refine it.  Concise-step math-reasoning datasets in the \textsc{GSM8K-Aug} mold are scarce, which limits the scope of dataset-side ablations.

\subsection{Training details}
\label{sec:setup-training}

Hyperparameters are detailed in Table~\ref{tab:hparams}.  We train for $3000$ GRPO steps with a batch size of $16$ prompts and $K = 32$ rollouts per prompt, dropout rate $p = 0.1$, peak learning rate $\eta_{\max} = 10^{-5}$ with $200$-step warmup and cosine decay, peak KL coefficient $\beta_0 = 5 \times 10^{-3}$ with proportional annealing per~\eqref{eq:method-anneal}, group filter window $[\alpha_{\min}, \alpha_{\max}] = [0.20, 0.95]$, and Huber threshold $\delta = 5$.

We load and train the model weights in bfloat16 precision and optimizer states in 8-bit, to maximize group size $K$ in a given compute budget. All reported experiments were conducted on a single GPU. The implementation includes the DDP-coordinated skip mechanism described in Section~\ref{sec:method-filter} as a forward-compatible hook for multi-rank training, but multi-GPU operation has not been empirically validated; all reported numbers come from single-GPU runs.

\begin{table}[h]
\centering
\begin{minipage}[t]{0.48\linewidth}
\centering
\caption{Training data sources and volume}
\label{tab:train_data}
\begin{tabular}{lr}
\toprule
Dataset & Usage Vol. \\
\midrule
GSM8K-aug(SFT)\cite{Deng2024InternalizeCoT} & 385620 \\
GSM8K-hard(GRPO)\cite{gao2022pal} & 1319\\
SVAMP(GRPO)\cite{Patel2021AreNLPModels} & 700\\
OpenMathInstruct-2(GRPO)\cite{Toshniwal2024OpenMathInstruct2} &  69981\\
GSM8K(Testing) &  1319\\
MultiArith(Testing)\cite{Roy2015SolvingGeneral} &  600\\
SVAMP(Testing) &  300\\
\bottomrule
\end{tabular}
\end{minipage}
\hfill
\begin{minipage}[t]{0.48\linewidth}
\centering
\caption{Training hyperparameters.}
\label{tab:hparams}
\begin{tabular}{lr}
\toprule
Parameter & Value \\
\midrule
Base model & Qwen Base \\
Latent depth $T$ & $6$ \\
Dropout rate $p$ & $0.1$ \\
Group size $K$ & $32$ \\
Train batch size (prompts) & $16$ \\
Peak learning rate $\eta_{\max}$ & $1 \times 10^{-5}$ \\
Warmup steps & $200$ \\
Total GRPO steps & $3000$ \\
Peak KL coefficient $\beta_0$ & $5 \times 10^{-3}$ \\
PPO clip $\eps$ (lower / upper) & $0.10$ / $0.12$ \\
Filter window $[\alpha_{\min}, \alpha_{\max}]$ & $[0.20, 0.95]$ \\
Huber threshold $\delta$ & $5$ \\
Reference renewal interval & $500$ steps \\
\bottomrule
\end{tabular}
\end{minipage}
\end{table}



\subsection{Results}
We report the best post-warmup (step $\ge 200$) validation pass@1 accuracy for all methods. Table~\ref{tab:main} details these results across multiple test sets, evaluated greedily without dropout. Dropout-GRPO achieves 29.01\% pass@1, outperforming the Coconut-SFT baseline (27.29\%) by 1.72 points. This marks the first successful application of group-relative reinforcement learning to a continuous latent-reasoning model.

Furthermore, Dropout-GRPO reverses the 1.33-point regression on SVAMP caused by Coconut curriculum training. It recovers performance from 41.67\% to 44.00\% $\pm$ 0.58\%, matching the Qwen2.5-1.5B base model within seed-level variance and demonstrating that lost math capabilities are recoverable via RL post-training rather than permanently degraded.

Figure~\ref{fig:eval_graph} illustrates the training trajectories. Standard GRPO with deterministic rollouts yields a flat trajectory at the SFT initialization, confirming the structural barrier noted in Section~\ref{sec:intro}. Without variance ($\sigma_r \equiv 0$), the per-rollout advantage $A^{(k)}$ and policy gradient vanish. Only the weak KL anchor pull remains, resulting in no measurable learning signal for Coconut-style latent reasoning.

Conversely, REINFORCE+dropout fails due to high variance. While latent stochasticity ensures diverse rollouts, replacing the group-relative baseline with a running EMA ($\alpha = 0.95$) degrades the gradient signal-to-noise ratio. At $K=1$, the single-rollout gradient variance exceeds the policy's stability margin, destabilizing the SFT initialization faster than it can learn (visible in the early decline in Figure~\ref{fig:eval_graph}). This highlights that GRPO's group baseline is essential for variance reduction.

Finally, Figure~\ref{fig:std_r_graph} validates the necessity of within-group reward standard deviation ($\sigma_r > 0$) for non-trivial GRPO gradient flow. Without dropout, all $K$ rollouts are bit-identical, collapsing $\sigma_r$ to exactly zero and triggering the structural failure mode our work addresses. The REINFORCE ablation ($K=1$) leaves within-group variance undefined, leading to the aforementioned destabilization. Ultimately, our shared-mask dropout successfully supplies the rollout diversity required by GRPO, whereas removing it entirely eliminates the gradient signal.

\begin{table}[b]
\centering
\caption{Accuracy with pass@1 across model variants.  Boldface indicates the best result. The Coconut baseline uses a single seed due to high computational costs in training; other experiments use 3 seeds.}
\label{tab:main}
\resizebox{\linewidth}{!}{%
\begin{tabular}{lccc}
\toprule
Model & GSM8K & SVAMP & MultiArith\\
\midrule
Qwen2.5-1.5B (zero-shot)                & $62.47\%$             & $43.00\%$             & $82.50\%$ \\
\midrule
Coconut SFT (baseline)                  & $27.29\%$             & $41.67\%$             & $61.67\%$ \\
Coconut + GRPO(deterministic rollout)   & $27.29\%$             & $41.67\%$             & $61.67\%$ \\
Coconut + REINFORCE + dropout (EMA)     & $25.09\% \pm 0.39\%$  & $38.78\% \pm 0.84\%$  & $57.39\% \pm 0.70\%$ \\
\textbf{Coconut + dropout-GRPO (ours)}  & $\mathbf{29.01\% \pm 0.18\%}$ & $\mathbf{44.00\% \pm 0.58\%}$ & $\mathbf{64.67\% \pm 1.01\%}$ \\
\bottomrule
\end{tabular}%
}
\end{table}

\begin{figure}[h]
    \centering
    \begin{minipage}[t]{0.48\linewidth}
        \centering
        \includegraphics[width=\linewidth]{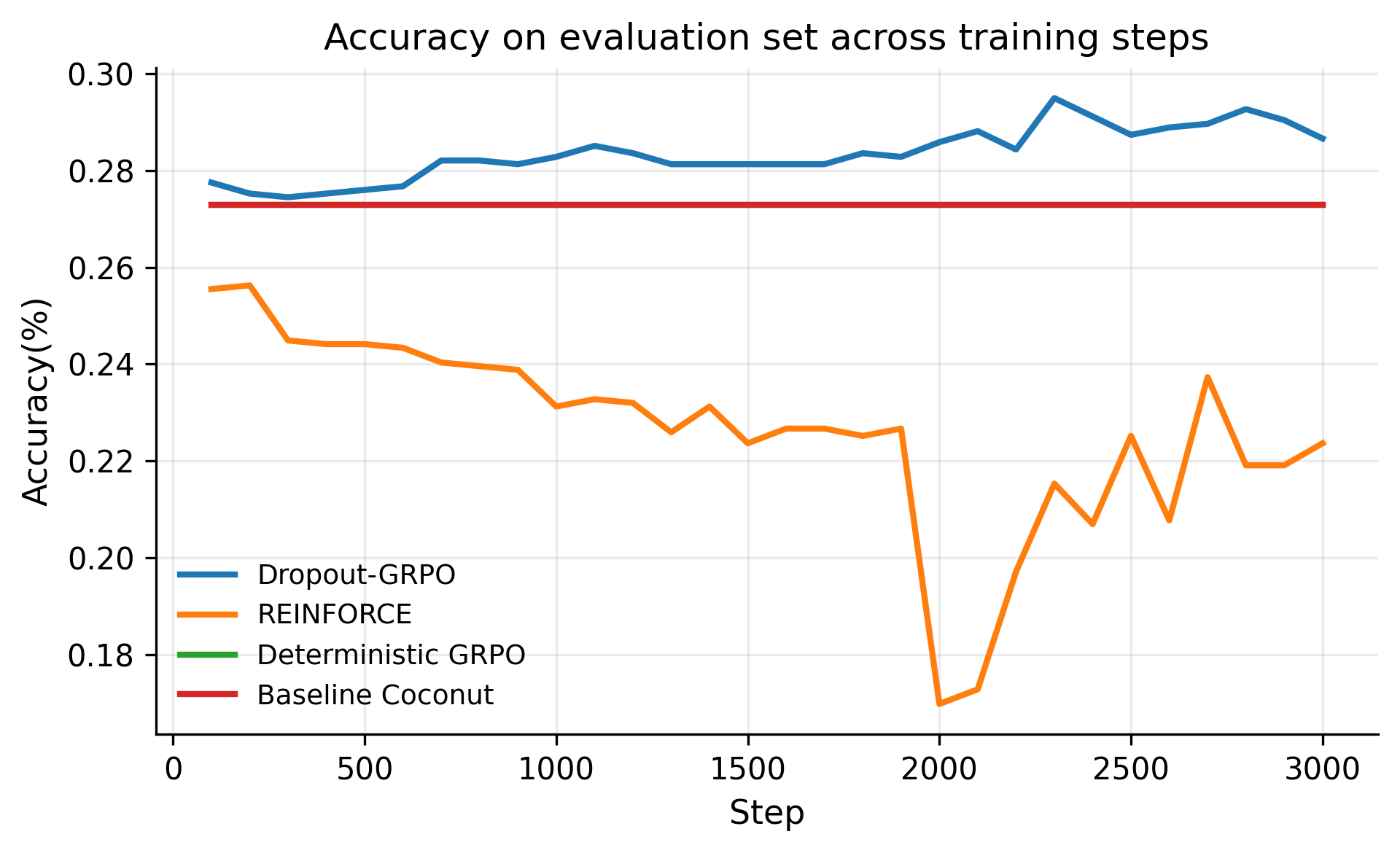}
        \caption{Evaluation accuracy across training steps}
        \label{fig:eval_graph}
    \end{minipage}
    \hfill
    \begin{minipage}[t]{0.48\linewidth}
        \centering
        \includegraphics[width=\linewidth]{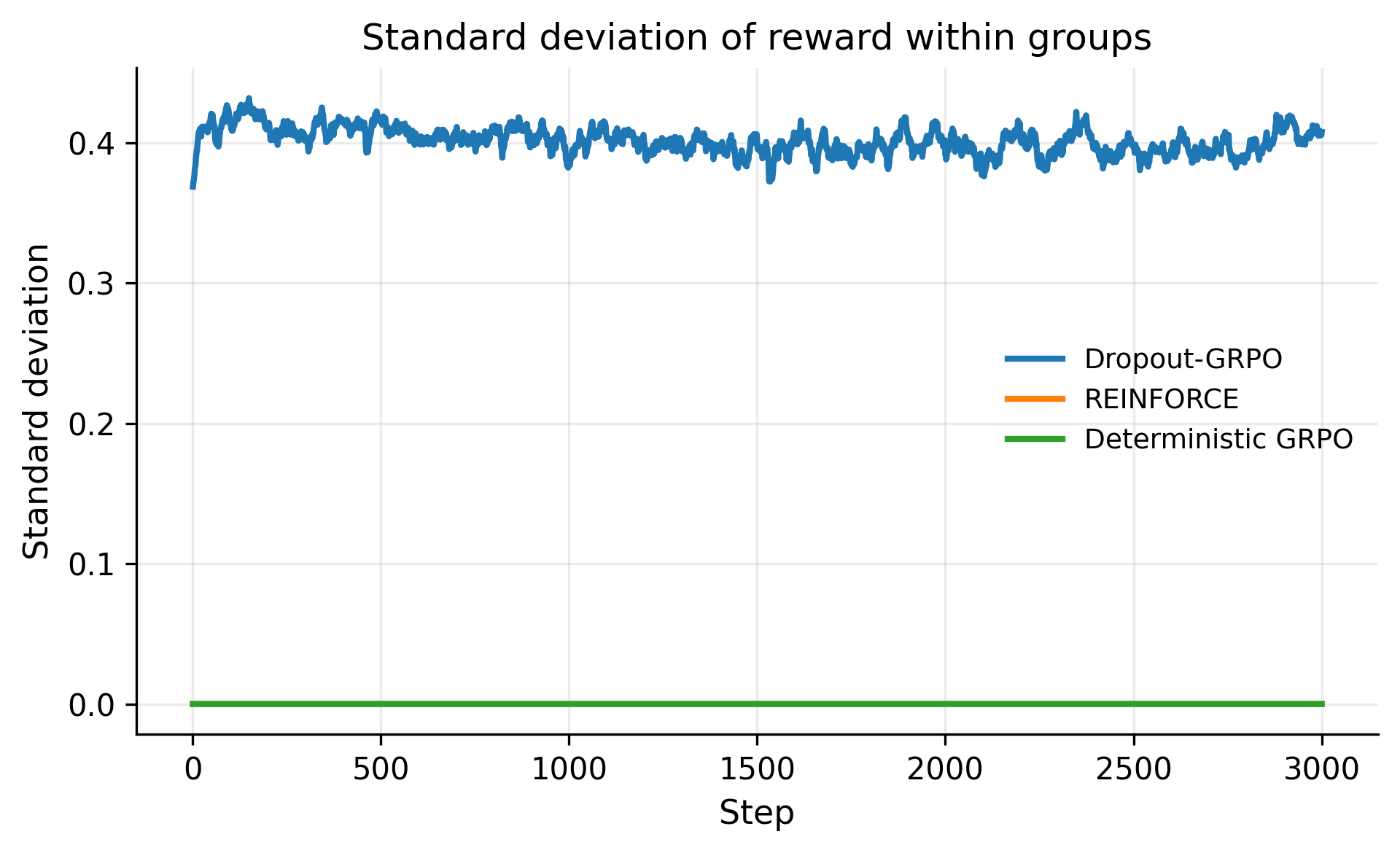}
        \caption{$\sigma_{r}$ within prompt groups over training. Rollout diversity only occurs during Dropout-GRPO.}
        \label{fig:std_r_graph}
    \end{minipage}
\end{figure}

%% file: text/6_limitations.tex
A few caveats apply to our results.

First, \emph{train/deploy mismatch:} inference disables dropout, so the deployed policy is $\pi_\theta(y \mid x)$ rather than the marginal $\bar\pi_\theta$ that GRPO optimizes; we report deterministic-decoding numbers throughout, accepting a bias whose magnitude depends on per-layer Lipschitz constants.

Second, \emph{architectural caveat:} many modern decoder implementations expose only limited dropout surfaces, and large-scale language-model pretraining often disables dropout altogether.\cite{Liu2025DropDropout} Our construction therefore reactivates a mechanism the field has largely retired for regularization. We use dropout here as a source of structured exogenous stochasticity rather than as a regularizer. The proof framework of Section~\ref{sec:theory} still survives this restriction because it requires only that $\xi$ be exogenous and replayable. Broader parameter-side perturbation surfaces (re-introducing residual/MLP dropout at fine-tuning time, or weight noise as in~\citet{Blundell2015WeightUncertainty}) could strengthen the construction and are a natural future-work direction.

Third, \emph{scale and magnitude:} our Qwen2.5-1.5B backbone is small for math benchmarks and our $2.03$-pp improvement is modest in absolute terms, both of which we expect to widen as the backbone is scaled.

Fourth, \emph{dataset availability:} our evaluation scope is bounded by the SFT data distribution available for Coconut-style training: concise per-step reasoning datasets in the \textsc{GSM8K-Aug} mold are scarce, and extending to harder benchmarks such as \textsc{MATH} or \textsc{AIME} requires upstream work on dataset curation rather than modifications to the RL post-training method itself.

A fuller discussion of these points and additional caveats is given in Appendix~\ref{app:extended}.

%% file: text/7_conclusion.tex
We have identified the structural reason that GRPO fails on \textsc{Coconut}-style latent reasoning models and proposed shared-mask variational dropout with mask replay as a minimal fix. The construction admits a Bayesian model-average reading of the marginal policy, and we proved that the resulting surrogate gradient is unbiased on the marginalized objective at $\theta = \theta_\old$ up to a $(1 - 1/K)$ scale, that mask replay provides a common-random-numbers variance reduction over fresh resampling, and that the latent Jacobian is well defined at every depth $T$. Empirically, dropout-GRPO improves a \textsc{Coconut} baseline on \textsc{GSM8K} from $27.29\%$ to $29.01\%$ pass@1, the first successful application of GRPO to continuous latent reasoning using hidden states, to our knowledge. With RL post-training of latent models now feasible, this opens the \textsc{Coconut}-line of work to standard verifier-reward training. Directions for future work are discussed in Appendix~\ref{app:extended}.

%% file: text/A_extended.tex
\subsection{Interpreting the performance gain from Dropout-GRPO}
The empirical improvement of $2.03$ percentage points on \textsc{GSM8K}, while statistically meaningful given our multi-seed variance estimates of $0.24$ percentage points, is modest in absolute terms. The result should be read as a \emph{viability} demonstration rather than as a competitive accuracy claim: prior to this work, group-relative reinforcement learning on \textsc{Coconut}-style latent reasoning produced no gradient signal whatsoever, so accuracy improvements on the order of the SFT baseline's noise floor are the relevant comparison. We expect the gap to widen as the backbone is scaled, latent depth $T$ is extended, and SFT initializations are made cleaner.

\subsection{Latent reasoning vs.\ token chain-of-thought}
\label{A_cot}
Pure-latent reasoning models continue to trail strong token-level chain-of-thought systems on math benchmarks, and our work does not claim otherwise. The contribution is structural: a key tool of the post-training toolkit (group-relative reinforcement learning) is now accessible to latent-reasoning models, which removes a barrier to closing the gap. Whether the gap closes in practice is an empirical question for future work.

\subsection{Memory and reproducibility}
The shared-mask construction is reproducible bit-identically only on the same hardware and PyTorch version; the Philox-based dropout kernel guarantees this within a fixed environment. We store one RNG seed per rollout (a few bytes), keeping the mask-storage cost at $\mathcal{O}(K \cdot \mathrm{seed})$ rather than the naive $\mathcal{O}(K \cdot T \cdot |H|)$ that explicit per-layer mask materialization would require. This makes the method essentially free in memory at the scales we tested.

\subsection{Evaluation scope and dataset availability} 
\textsc{Coconut} curriculum training requires intermediate reasoning chains in a concise per-step format that allows the latent recurrence to absorb each step into a single hidden-state update. \textsc{GSM8K-Aug}~\citep{Deng2024InternalizeCoT} provides this for grade-school arithmetic, but comparable resources for advanced mathematics (\textsc{MATH}, \textsc{MATH-500}, \textsc{AIME}) are not currently available: the prevailing CoT-augmented variants of these datasets emit multi-paragraph derivations, which the Coconut latent representation cannot ingest at the curriculum step granularity. 

We therefore restrict evaluation to benchmarks distributionally aligned with \textsc{GSM8K-Aug} (\textsc{GSM8K}, \textsc{SVAMP}, \textsc{MultiArith}). Extending Dropout-GRPO to harder benchmarks is a complementary direction that depends on either (i) curating concise-step variants of advanced math datasets, or (ii) extending the Coconut SFT methodology to ingest verbose intermediate reasoning, the latter of which aligns with \ref{A_cot}. Neither is a modification to the RL method introduced in this paper.

\subsection{On the use of dropout in modern LLM training}
\label{app:dropout-convention}
The shared-mask dropout construction of Section~\ref{sec:theory-setup} reactivates a mechanism that has fallen out of favor in modern large-language-model training: contemporary foundation models such as Qwen2.5~\citep{qwen2024} are typically trained at scales where dropout is often disabled.\cite{Liu2025DropDropout} The convention shift is well-motivated: dropout's classical role as a generalization regularizer is largely subsumed by data abundance at the scales these models are trained on, where the inductive bias from billions of training tokens supplies the regularization that Bernoulli masking once provided.

We emphasize that our use of dropout is functionally distinct from this regularization role. We do not engage dropout to improve generalization, but as a structured source of \emph{exogenous stochasticity}. The operator's ability to produce bit-identical, replayable parameter perturbations indexed by a discrete RNG seed makes it a natural off-the-shelf instance of the perturbation mechanism required by Theorem~\ref{thm:unbiased} and by the variational interpretation of Section~\ref{sec:theory}. 

\paragraph{Implications for transferability.} The proof framework of Section~\ref{sec:theory} requires only that $\xi$ be exogenous (independent of $\theta$) and replayable (deterministically regenerable from a stored seed). Any architecture supporting any form of replayable parameter-side perturbation is therefore theoretically compatible with the construction, such as weight noise in the style of~\citet{Blundell2015WeightUncertainty}, batch-efficient variants such as Flipout~\citep{Wen2018Flipout}, structured pruning masks, or low-rank perturbations sampled from a fixed distribution. 

Dropout is the cleanest instance available in current decoder configurations, but the construction itself is mechanism-agnostic. Re-introducing residual or MLP dropout at fine-tuning time, or substituting alternative perturbation primitives, are natural directions for extending the present work to architectures and training regimes where attention dropout alone provides insufficient stochasticity surface.

\subsection{Future work}
Several directions appear promising. First, extending dropout-GRPO to chain-of-thought datasets at larger scale, where the supply of high-quality SFT data is abundant. Second, scaling the backbone: $1.5\mathrm{B}$ parameters is small for math benchmarks, and the gain from RL post-training typically widens with model size. Third, multi-GPU validation and throughput: although the implementation includes a DDP-coordinated skip mechanism intended to handle asymmetric filter outcomes across ranks, this code path has not been empirically tested in a multi-GPU configuration. Validating correctness, measuring the all-reduce overhead per step, and tuning for throughput at scale are part of this future-work direction. Fourth, while pure-latent reasoning currently trails token-level chain-of-thought on math, the availability of group-relative reinforcement learning, which is enabled by the construction proposed in this paper, may be a key ingredient in closing that gap.

%% file: text/B_proofs.tex
\subsection{Proof of Lemma~\ref{lem:induction} (well-defined latent gradient)}
\label{app:proof-lemma}

By induction on $t$.

\textit{Base case} $(t = 0)$. By Assumption~\ref{ass:smooth}, $h_0 = \mathrm{embed}_\theta(x)$ is differentiable in $\theta$, giving $\partial h_0 / \partial \theta = \partial \mathrm{embed}_\theta(x) / \partial \theta$ as stated.

\textit{Inductive step.} Assume $\partial h_{t-1} / \partial \theta$ exists almost everywhere. Write $h_t = f(\tilde\theta, h_{t-1})$ as a function of two arguments. By Assumption~\ref{ass:smooth} and the multivariate chain rule,
\[
\frac{\partial h_t}{\partial \theta}
   \;=\;
\frac{\partial f}{\partial \tilde\theta}\bigg|_{h_{t-1}}
   \cdot \frac{\partial \tilde\theta}{\partial \theta}
   \;+\;
\frac{\partial f}{\partial h_{t-1}}\bigg|_{h_{t-1}}
   \cdot \frac{\partial h_{t-1}}{\partial \theta},
\]
with $\partial \tilde\theta / \partial \theta$ the structured Bernoulli rescaling (a deterministic linear map fixed by $\xi$). Both terms exist by Assumption~\ref{ass:smooth} and the inductive hypothesis, yielding the recursion claimed in~\eqref{eq:theory-jacobian}.

A final chain-rule application through the answer-token head, together with differentiability of $\log \pi_\theta(y_\ell \mid x, h_T, y_{<\ell})$ in both $\theta$ and $h_T$ from Assumption~\ref{ass:smooth}, shows that $\nabla_\theta \log \pi_\theta(y \mid x, \xi)$ is well defined almost everywhere.
\qed

\subsection{Proof of Theorem~\ref{thm:unbiased} (unbiased policy-surrogate gradient)}
\label{app:proof-thm}

At $\theta = \theta_\old$, mask replay yields $\rho_{\theta_\old, \ell}^{(k)} \equiv 1$ for every token, so the PPO clip is inactive and the surrogate gradient reduces to the unclipped product. Writing $s^{(k)} := \nabla_\theta \log \pi_\theta(y^{(k)} \mid x, \xi^{(k)})|_{\theta_\old}$,
\[
\nabla_\theta \Loss^{\mathrm{surr}}\big|_{\theta_\old}
   \;=\;
\E\!\left[\frac{1}{K}\sum_{k=1}^K (r^{(k)} - \mu_r)\, s^{(k)}\right].
\]
Expanding the baseline term and using i.i.d.\ rollouts,
\[
\E[\mu_r\, s^{(k)}]
   \;=\;
\frac{1}{K}\sum_{j=1}^K \E[r^{(j)}\, s^{(k)}]
   \;=\;
\frac{1}{K}\E[r^{(k)}\, s^{(k)}],
\]
because for $j \ne k$ the independence factorization combined with the standard zero-mean property of the score function under $\Pi_{\theta_\old}$ gives $\E[r^{(j)}\, s^{(k)}] = \E[r^{(j)}]\,\E[s^{(k)}] = 0$. Substituting,
\[
\E[(r^{(k)} - \mu_r)\, s^{(k)}]
   \;=\;
\Big(1 - \tfrac{1}{K}\Big)\,\E[r^{(k)}\, s^{(k)}].
\]
The right-hand side is the standard REINFORCE policy gradient on the augmented policy $\Pi_\theta$, which by Corollary~\ref{cor:marginal} equals $\nabla J(\theta)|_{\theta_\old}$. Averaging over $k$ preserves this and yields~\eqref{eq:theory-unbiased}.
\qed

\subsection{Proof of Proposition~\ref{prop:crn} (CRN coupling)}
\label{app:proof-prop}

We decompose each estimator into its zero-order value at $\theta = \theta_\old$ plus a Lipschitz perturbation:
\[
\hat g_{\mathrm{CRN}}(\theta;\xi,y)
   \;=\;
\hat g_{\mathrm{CRN}}(\theta_\old;\xi,y)
   \;+\;
\Delta_{\mathrm{CRN}}(\theta;\xi,y),
\qquad
\|\Delta_{\mathrm{CRN}}\| \le L\|\theta - \theta_\old\|,
\]
and analogously for $\hat g_{\mathrm{fresh}}$ with $(\xi', y')$. Because both $(\xi, y)$ and $(\xi', y')$ are drawn from the same base policy $\Pi_{\theta_\old}(\cdot \mid x)$, their zero-order distributions strictly coincide; in particular, $\Var[\hat g_{\mathrm{CRN}}(\theta_\old)] = \Var[\hat g_{\mathrm{fresh}}(\theta_\old)]$ at leading order.

When the same gradient estimator is invoked across multiple evaluation points (e.g.\ across mini-epochs of PPO updates), the fresh-rollout estimator independently resamples $(\xi', y')$ at each call, contributing an additional across-call variance term. The mask-replay (CRN) estimator instead reuses the same $(\xi, y)$ across calls, coupling the score-function estimator across $\theta$ values via shared exogenous noise; the across-call variance contribution vanishes by construction.

This is the standard Common Random Numbers argument~\citep{glasserman1992crn}, specialized here for a score-function estimator. In the trust-region regime $\|\theta - \theta_\old\| \to 0$, the per-sample variance bound is tight at $\theta = \theta_\old$ and degrades only as $O(L \|\theta - \theta_\old\|)$ via the Lipschitz hypothesis.
\qed